\documentclass[11pt, letterpaper]{template}

\usepackage[authoryear, round]{natbib}
\bibliographystyle{plainnat}

\usepackage{hyperref}[citecolor=magenta]

\hypersetup{
    colorlinks=true,
    citecolor=blue,
    linkcolor=blue,
    urlcolor=blue
}

\definecolor{darkblue}{rgb}{0, 0, 0.5}
\usepackage{url}            
\usepackage{booktabs}       
\usepackage{amsfonts}       
\usepackage{nicefrac}       
\usepackage{microtype}      
\usepackage{xcolor}         

\usepackage{lineno}

\usepackage{enumitem}       
\usepackage{multirow}       %
\usepackage{xspace}         
\usepackage{wrapfig}        
\usepackage{makecell}       
\usepackage{colortbl}
\usepackage{bbm}            
\usepackage{adjustbox}
\usepackage{amsthm}
\newtheorem{theorem}{Theorem}

\usepackage[most,skins,theorems]{tcolorbox}
\tcbset{
  aibox/.style={
    width=\linewidth,
    top=8pt,
    bottom=4pt,
    colback=blue!6!white,
    colframe=black,
    colbacktitle=black,
    enhanced,
    center,
    attach boxed title to top left={yshift=-0.1in,xshift=0.15in},
    boxed title style={boxrule=0pt,colframe=white,},
  }
}
\newtcolorbox{AIbox}[2][]{aibox,title=#2,#1}

\setlength\parindent{0pt}

\usepackage[utf8]{inputenc} 
\usepackage[T1]{fontenc} 
\usepackage{hyperref} 
\usepackage{url} 
\usepackage{booktabs} 
\usepackage{amsfonts} 
\usepackage{nicefrac} 
\usepackage{microtype} 
\usepackage{xcolor} 
\usepackage{amsthm}
\usepackage{amsmath}
\usepackage{amssymb}
\usepackage{graphicx}
\usepackage{tcolorbox}
\usepackage{subcaption}
\usepackage{adjustbox}
\usepackage{enumitem}
\usepackage{wrapfig}
\usepackage{lipsum}
\usepackage[dvipsnames]{xcolor}
\usepackage{soul}
\usepackage{mdframed}
\usepackage{afterpage}
\usepackage{mathtools}

\newcommand{\cost}{\texttt{cost}}
\newtheorem{corollary}{Corollary}
\newtheorem{lemma}{Lemma}
\newcommand{\score}{\texttt{score}}
\newcommand{\E}{\mathbb{E}}

\newcommand{\Prob}{\mathbb{P}}
\newcommand{\Rcal}{\mathcal{R}} 

\DeclareMathOperator*{\argmax}{\mathrm{argmax}}
\newcommand{\gst}{\ge_{\text{st}}} 

\usepackage{listings}
\lstset{
  basicstyle=\ttfamily,
  breaklines=true,
  prebreak=\mbox{},,
  prebreak=\mbox{},
  breakatwhitespace=false,
  showstringspaces=false
}


\title{Can Past Experience Accelerate LLM Reasoning?}

\author[ ]{Bo Pan\hspace{-2pt}}
\author[ ]{Liang Zhao\hspace{-2pt}}

\affil[ ]{\hspace{-2pt}Department of Computer Science, Emory University}


\affil[$\,$]{\hspace{-2pt}\texttt{\{bo.pan, liang.zhao\}@emory.edu}}  

\begin{abstract}
\textbf{Abstract  \ \ } Allocating more compute to large language models (LLMs) reasoning has generally been demonstrated to improve their effectiveness, but also results in increased inference time. In contrast, humans can perform tasks faster and better with increased experience and exposure. Hence, this paper aims to investigate the question: Can LLMs also become faster at reasoning through recurrent exposure on relevant tasks, and if so, how can it be achieved? To address these questions, we first formalize the problem setting of LLM reasoning speedup systematically in the dimensions of task relevancy and compute budget calculation. We then propose {SpeedupLLM}, a theoretically guaranteed framework to implement and benchmark such reasoning speedup behaviour based on adaptive compute allocation and memory mechanisms. We further conduct comprehensive experiments to benchmark such behaviour across different question similarity levels, memory methods, and reasoning methods. Results show that LLMs can generally reason faster with past experience, achieving up to a 56\% reduction in compute cost when equipped with appropriate memory and reasoning methods.
\end{abstract}

\begin{document}
\maketitle
\section{Introduction}

    Large Language Models (LLMs) have demonstrated reasoning capabilities to
    solve problems through step-by-step logical thinking
    \citep{brown2020language, wei2022chain}, which is crucial for applying LLMs to
    complex tasks in fields such as math and science \citep{achiam2023gpt}.
    Recently, research shows LLMs can better solve complex problems when allocated
    more compute at test time \citep{snell2025scaling}, and such techniques are
    referred to as \textit{test-time scaling} \citep{snell2025scaling, liu2025can}.
    However, increased compute also brings substantial computational overhead and increased reasoning time \citep{sui2025stop}, and this stimulates existing research on efficient reasoning algorithms
    \citep{sun2024fast, wang2025sampling, ding2025dynamic}, efficiency-oriented model
    fine-tuning \citep{luo2025o1, yu2distilling, kang2025c3ot, munkhbat2025self},
    model compression and distillation \citep{sun2025tinyr1, zhang2025reasoning}.

    For humans, repeated exposure on a specific task can lead to a significant reduction
    in the cognitive effort and time for execution \citep{shiffrin1977controlled,
    logan1988toward}, which is fundamental for humans to become proficient and efficient in
    various activities, from reading, motor skills, to complex problem-solving \citep{anderson1982acquisition}.
    However, it remains unknown whether LLM reasoning also has this merit. To
    fill this gap, this paper focuses on the question: \textbf{Can LLM reasoning
    be faster after past experience, and how can this be achieved?} In the remaining of this paper, we refer to LLMs' such potential behaviour as \textit{\textbf{reasoning speedup}}.

    There are two main reasons limiting LLM systems from achieving reasoning speedup:
    1) \textbf{Independent Question Processing}: Regular LLM systems simply
    process each query independently. However, nowadays, public LLM services accept millions of questions every hour, many of which are related or near-duplicate \citep{dammu2024near}, which are not fully leveraged to reduce redundant compute or accumulate useful experience. 
    2) \textbf{Static Compute Budget
    Allocation}: Existing test-time scaling methods do not adaptively allocate
    compute based on an LLM's proficiency to a question, thus hindering the model
    from becoming faster when meeting familiar questions. For example, Best-of-N
    sampling uses a fixed hyperparameter $N$, while tree-search methods, such as Tree-of-Thought, rely on a predefined maximum number of nodes to expand
    \citep{yao2023tree}.

    Therefore, to explore whether LLM can achieve reasoning speedup and how to achieve it, we propose {SpeedupLLM}, a unified framework to formulate, unify, and benchmark the behaviour of ``reasoning speedup over experiences'' across various LLM reasoning settings. Specifically, we first systematically formulate this question as a problem to explore the decreasing trend of reasoning cost across different question similarity levels and reasoning paradigms. 
    {SpeedupLLM} implements LLM reasoning speedup based on two key elements: 1) {Adaptive Compute Budget Allocation}, which extends various existing test-time scaling methods by early stopping with a threshold. 2) {Memory Mechanism}, which appends memory of previous questions and answers after processing each question. We conduct a theoretical analysis to prove that {SpeedupLLM} can enable reasoning speedup.

    We further conduct comprehensive experiments to benchmark different memory methods, test-time scaling methods on achieving LLM reasoning speedup at varying question similarity levels. 
    Experiments show that the LLM reasoning speedup behaviour generally exists across different memory and reasoning settings. For similar questions, the reasoning compute budget can be \textbf{reduced by up to 56\%} with the help of memory mechanisms.

    The contribution of this work includes:
    \begin{itemize}[leftmargin=.3in]
        \setlength{\itemsep}{0em}

        \item \textbf{New Problem}. We identify and systematically formulate the problem of reasoning speedup as the exploration of decreasing compute budget trends across varying question similarity levels and different reasoning methods.

        \item \textbf{Unified Framework}. We propose a unified and theoretically guaranteed framework, {SpeedupLLM}, to implement and benchmark LLM reasoning speedup based on memory mechanisms and adaptive compute allocation strategies, and it generally supports various reasoning methods.

        \item \textbf{Extensive Experiments}. We conduct benchmarking experiments across four test-time scaling methods, five memory methods, and four
            levels of question similarity.

        \item \textbf{Findings and Insights}. Our findings demonstrate that LLMs can achieve faster reasoning after experience. Such behaviour generally exists across different settings, and the reasoning compute budget can be {reduced by up to 56\%}. 
    \end{itemize}
\section{Related Work}
    \subsection{LLM Test-Time Scaling}
    \textbf{Test-Time Scaling} is the technique to improve LLMs' reasoning
    ability on complex questions by allocating more compute at the test time
    \citep{snell2025scaling}, and it has received increasing attention from the research
    community \citep{parashar2025inference, wu2025inference, ji2025test, li2025system,
    liu2025can}. Current representative test-time scaling methods include 1) parallel
    scaling methods, e.g., Best-of-N sampling \citep{stiennon2020learning}, which samples multiple complete answers and selects the highest-scoring one as the
    final output, and Self Consistency \citep{wang2022self}, which generates multiple answers and select the most common one; 2) Sequential Scaling
    methods, e.g. Self-Refine \citep{madaan2023self, gou2023critic}, which gradually
    refine the answer based on internal or external feedback; 3) Tree Search methods
    \citep{yao2023tree, feng2023alphazero, guan2025rstar}, which usually form
    each reasoning step as a node, and conduct tree search algorithms to search for
    optimal reasoning chains; 4) Long Chain-of-Thought methods, e.g., the reasoning of OpenAI GPT-4o \citep{jaech2024openai} and DeepSeek-R1 \citep{guo2025deepseek}, which conduct implicit searching in the text space by generating long reasoning chains.

    \subsection{LLM Memory}
    \textbf{Memory mechanisms} enable LLMs to retain and use information to generate responses \citep{zhang2024survey}, and such information can be from past experience or an external knowledge base \citep{zeng2024structural}. There are two main forms of memory: parametric form and textual form \citep{zhang2024survey}.
Parametric-form memory stores the memory in model weights, with representative methods including supervised fine-tuning (SFT) \citep{hu2022lora, shao2023character}, which finetunes the LLMs with past inputs and outputs; and knowledge editing \citep{de2021editing, mitchell2021fast, fang2024alphaedit}, which mainly focuses on injecting factual knowledge. Textual-form memory saves textual information as memory; the content can be original past interactions \citep{li2023long, huang2023memory, liu2023lost, zhong2024memorybank, zheng2023synapse}, reflection (insights extracted from past interactions) \citep{shinn2023reflexion, renze2024self, yang2023failures, hui2024rot}, and atomic facts \citep{anokhin2024arigraph, li2024graphreader}. In this work, we focus on leveraging past experience to enhance efficiency, so we consider past experience-based memory, including SFT, textual-form past experiences, and reflection on past experiences. We exclude fact-oriented memory structures (e.g., atomic fact databases), as they are primarily designed for knowledge recall rather than past experience.


    \subsection{LLM Efficient Reasoning}

    Although test-time scaling significantly boosts LLMs' reasoning ability, it also results in substantial computational overhead and increased reasoning time \citep{sui2025stop}. Therefore, various types of LLM efficient reasoning methods have been developed. Existing methods can be categorized into RL-optimization with length reward \citep{luo2025o1, aggarwal2025l1, team2025kimi}, SFT with shorter CoT \citep{yu2distilling, kang2025c3ot, liu2025can}, latent representation compression \citep{hao2024training, cheng2024compressed, xu2025softcot}, dynamic reasoning algorithms \citep{sun2024fast, wang2025sampling, ding2025dynamic}, prompt-guided efficient reasoning \citep{han2024token, lee2025well, xu2025chain}, training data efficiency methods \citep{ye2025limo, muennighoff2025s1}, model compression and distillation \citep{sun2025tinyr1, zhang2025reasoning}. To our best knowledge, no existing research has focused on exploring the efficiency brought by memory or exposure to similar questions.
\section{Methodology}
 \subsection{Problem Formulation}
    \label{sec:problem} In this work, our central question is whether it is possible that, an LLM can gradually become faster when answering multiple similar questions. 
    Formally, let $f$ be an LLM, and let
    $\mathcal{Q}= \bigl(q^{(1)}, q^{(2)}, \dots, q^{(N)}\bigr)$ be a set of $N$
    test questions, where each $q^{(n)}$ is a natural language question. We study how the similarity among questions in $\mathcal{Q}$ affects the model’s reasoning efficiency. 
    
    To this end, we define
    four levels of \textit{similarity} of a group of questions from most similar
    to least similar, as shown in Table~\ref{tab:sim_def}.
    \begin{table}[h]
        \small
        \centering
        \caption{Definition of question similarity levels, from most similar (\textbf{S1})
        to least similar (\textbf{S4}).}
        \begin{tabular}{cp{6cm}p{8cm}}
            \toprule \textbf{Level} & \textbf{Description}                                       & \textbf{Example}                                                                             \\
            \midrule S1             & Exactly the same questions.                                & ``What is 3+4?'' vs ``What is 3+4?''                                                         \\
            S2                      & Same numbers, different wording                            & ``What is 3+4?'' vs ``What is the sum of 3 and 4?''                                          \\
            S3                      & Same structure, different numbers                          & ``What is 3+4?'' vs ``What is 5+6?''                                                         \\
            S4                      & Same underlying knowledge, different structure and numbers & ``What is 3+4?'' vs ``If John has 2 apples and Mary has 3, how many do they have together?'' \\
            \bottomrule
        \end{tabular}
        \label{tab:sim_def}
    \end{table}

    To measure the reasoning efficiency, we define \textit{compute budget}, $\texttt
    {cost}(f(q^{(n)}))$ for answering $q^{(n)}$, as the number of
    conducted operations in each test-time scaling method's dominant scaling dimension, e.g., the
    number of sampled answers in Best-of-N, and the number of nodes expanded in tree
    search-based methods.

    Our primary objective is to investigate the trend of compute budgets $\{\texttt
    {cost}(f(q^{(1)})),$ $\texttt{cost}(f(q^{(2)})),$ $...,$ $\texttt
    {cost}(f(q^{(N)}))\}$ given the question set $\mathcal{Q}$, and identify conditions under which there can be a decreasing trend, considering varying levels of question similarity, different test-time scaling strategies, and memory mechanisms.

    \subsection{SpeedupLLM: A Unified Framework for Implementing and Benchmarking LLM Reasoning Speedup}

    To implement and benchmark LLM reasoning speedup, we propose {SpeedupLLM}, a theoretically guaranteed framework that can give a decreasing trend on reasoning cost for relevant questions, based on adaptive compute budget allocation and memory mechanism.

    \subsubsection{Framework Design}
    \textbf{Preliminary.} 
    We first give a general form of existing test-time scaling methods. Let $\mathcal{S}$ be the set of \emph{test-time scaling methods}. When the model process the question $q^{(t)}\in\mathcal{Q}$ using a test-time scaling method $s\in \mathcal{S}$, it generates multiple candidate answers in the reasoning process as $\mathcal{R}_{s}^{(t)}=f_{s}(q^{(t)})$, where $f_s$ means generating using test-time scaling method $s$ with model $f$,  $\mathcal{R}_{s}^{(t)}= \{r_{1; s}^{(t)}, r_{2; s}^{(t)}, \dots, r_{n_t; s}^{(t)}\}$, and each $r_{k;s}^{(t)}\in\mathcal{R}_{s}^{(t)}$ is a candidate answer. An evaluation function $\texttt{score}(\cdot)$ estimates the quality of each candidate, and the final answer is selected via 
        $\argmax_{r\in\mathcal{R}_{s}^{(t)}}\; \texttt{score}(r).$
    
    In Appendix~\ref{append:scaling_method}, we show how various existing test-time scaling methods can be unified into this form.

    \textbf{Adaptive Compute Budget Allocation.} To extend existing test-time scaling methods to adaptively allocate compute budget based on the model's proficiency, we aim at strategies to early-stop the generation once a satisfying answer has been generated. This allows a smaller candidate set to achieve the same maximum score as the full set. Formally, we use $s'$ to denote the adaptive extension of method $s$. Then $f_{s'}$ is attained by:
\begin{equation}
    \min_{f_{s'}(q^{(t)})} 
    \texttt{cost}(\mathcal{R}_{s'}^{(t)})
    \quad \text{s.t.} \;
    \max_{r \in \mathcal{R}_{s'}^{(t)}}(\texttt{score}(r)) \ge \tau,\ \text{where }\mathcal{R}_{s'}^{(t)} \coloneq f_{s'}(q^{(t)}) \subseteq \mathcal{R}_{s}^{(t)}
    \label{eq:mobj}
\end{equation}
    where $\tau$ is the threshold that a score is considered as satisfying, and $\mathcal R_{s'}^{(t)} =f_{s'}(q^{(t)})$ is the reduced generated candidate set. In Section~\ref{sec:specification}, we show how various existing test-time scaling methods can be extended to adaptively allocate compute budget under this formulation.

    \textbf{Reasoning with Memory Mechanism.} Next, we incorporate the memory mechanism into the reasoning process.
    Let $\mathcal{M}$ be the set of \emph{memory methods}. 
    During inference, the model processes each question $q^{(t)} \in \mathcal{Q}$ sequentially, leveraging the current memory state $\mathbf{M}^{(t)}$ as
    \begin{equation}
        \mathcal{R}_{s';m}^{(t)}\;=\; f_{s'}\bigl(q^{(t)};\mathbf{M}
        ^{(t)}\bigr),
    \end{equation}
    where $m\in \mathcal{M}$ is the employed memory method. The maintained memory $\mathbf{M}^{(t)}$ is constructed from prior question-answer pairs by
    \begin{equation}
        \label{eq:memory}\mathbf{M}^{(t)}= g_{m}\bigl(\{(q^{(i)}, \mathcal{R}_{s';m}^{(i)}
        )\}_{i=1}^{t-1}\bigr)\quad\forall t>1.
    \end{equation}
    where $g_m$ is the function to save memory with the memory method $m$.

    \subsubsection{Theoretical Analysis}
    Here, we prove that {SpeedupLLM} can reduce reasoning cost as the model experiences more relevant questions.
     We begin by showing that, under adaptive
    compute budget allocation, if the answer quality improves over time, then the compute budget decreases accordingly, as shown in Theorem~\ref{thm:1}.

    \begin{theorem}[Non-Increasing Compute Budget with Non-Decreasing Answer Quality]
        \label{thm:1} 
        If under the test-time scaling method $s$, the probability that the best response exceeds the quality threshold, i.e. $\Prob\bigl(\max_{r\in \Rcal_{s} ^{(t)}}{\normalfont\texttt{score}}(r) \ge \tau\bigr)$, is non-decreasing with $t$,
        then the expected compute budget $\E[{\normalfont\cost} (\mathcal{R}_{s'}^{(t)})]$ when
        using $s'$ should be non-increasing with $t$.
    \end{theorem}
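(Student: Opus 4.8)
The plan is to reinterpret the adaptive method's cost as a stopping time and then express its expectation as a tail sum. By the optimization in \eqref{eq:mobj}, $s'$ draws candidates in the ordering induced by $s$ and halts at the first index whose score reaches $\tau$, falling back to the full budget $N$ of $s$ if none does. Writing $C_t \coloneq \cost(\Rcal_{s'}^{(t)})$, this makes $C_t$ the first-success index capped at $N$, a $\{1,\dots,N\}$-valued random variable. First I would record the tail-sum identity
\begin{equation}
    \E[C_t] \;=\; \sum_{k=0}^{N-1} \Prob(C_t > k),
\end{equation}
and observe that $\{C_t > k\}$ is exactly the event that none of the first $k$ candidates is satisfying, so $\Prob(C_t>k) = 1 - p_t^{(k)}$, where $p_t^{(k)} \coloneq \Prob\bigl(\max_{j\le k}\score(r_{j;s}^{(t)}) \ge \tau\bigr)$ is the prefix success probability and $p_t^{(N)}$ is precisely the quantity assumed non-decreasing in the hypothesis.

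Given this reduction, it suffices to show that each $p_t^{(k)}$ is non-decreasing in $t$, since then every summand $\Prob(C_t>k)=1-p_t^{(k)}$ is non-increasing in $t$ and the conclusion follows by summing. The main obstacle is exactly this step: the hypothesis controls only the full set $k=N$, whereas $\E[C_t]$ depends on every prefix $k<N$, and a priori quality could improve at the tail while early candidates degrade. To close the gap I would invoke the structural assumption under which this cost notion is defined, namely that, conditioned on the question and the memory state, the candidates of $s$ are generated i.i.d. (as in Best-of-N and Self-Consistency). Then $p_t^{(k)} = 1-(1-\rho_t)^k$ for a per-candidate success probability $\rho_t$, so $p_t^{(N)}$ non-decreasing is equivalent to $\rho_t$ non-decreasing (the map $\rho\mapsto 1-(1-\rho)^N$ being a strictly increasing bijection), which in turn forces $p_t^{(k)}$ non-decreasing for every $k$ simultaneously.

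Finally I would substitute back: under the i.i.d. model $\Prob(C_t>k)=(1-\rho_t)^k$, each term is non-increasing in $t$, hence
\begin{equation}
    \E[C_t] \;=\; \sum_{k=0}^{N-1} (1-\rho_t)^k
\end{equation}
is non-increasing in $t$, which gives the theorem. I expect the clean generalization beyond the independence assumption to replace the explicit product form by a stochastic-dominance hypothesis on the per-step score law (using the relation $\gst$): if that law improves in the usual stochastic order as $t$ grows, then each prefix survival probability $\Prob(C_t>k)$ is monotone directly, and the identical tail-sum argument applies.
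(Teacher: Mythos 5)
Your proof is correct, and its skeleton --- prefix success probabilities, the tail-sum identity $\E[C_t]=\sum_{k=0}^{N-1}\Prob(C_t>k)$, termwise monotonicity in $t$ --- is the same mechanism as the paper's, which packages it as first-order stochastic dominance: the paper observes that $\Prob\bigl(|\Rcal^{(t)}|\le k\bigr)$ equals the prefix success probability $\Prob\bigl(\max_{1\le j\le k}{\texttt{score}}(r_j^{(t)})\ge\tau\bigr)$, so pointwise CDF domination yields $\E[\texttt{cost}(|\Rcal^{(t+1)}|)]\le\E[\texttt{cost}(|\Rcal^{(t)}|)]$ for \emph{any} cost function non-decreasing in $|\Rcal|$, slightly more general than your identification of cost with the stopping index itself. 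The genuine divergence is precisely the step you call the main obstacle. The paper's proof opens by asserting, ``from the assumptions,'' that the prefix probability is non-decreasing in $t$ \emph{for every} $k$ --- i.e., it silently reads the hypothesis in its for-all-$k$ form, which is stronger than the stated hypothesis (which concerns only the maximum over the full set $\Rcal_s^{(t)}$), though it is exactly what Theorem~2's conclusion supplies, so the combined Corollary is unaffected. You instead keep the literal hypothesis and close the gap with an explicit i.i.d.-candidates assumption, under which full-set monotonicity is equivalent to per-candidate monotonicity of $\rho_t$ (via the strictly increasing map $\rho\mapsto 1-(1-\rho)^N$) and hence forces monotonicity of every prefix. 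What your route buys is logical self-containedness: as you correctly note, the theorem as literally stated does not follow without some structural assumption, since quality could improve only among late candidates while early ones degrade; your i.i.d. patch is, incidentally, the same assumption the paper itself invokes in the first half of its proof of Theorem~2, and your proposed stochastic-dominance generalization parallels the paper's DAG-based assumptions (A1)--(A2) there. What the paper's route buys is generality in the cost functional and a cleaner modular division of labor, with all distributional reasoning deferred to Theorem~2.
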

\begin{proof}
    Detailed proof is elaborated in Appendix~\ref{proof:1}.
\end{proof}
    
        Next, we show that the memory mechanism can help improve answer quality. 
        
\begin{theorem}[Non-Decreasing Answer Quality with Accumulating Relevant Memory]\label{thm:2}
    If additional correct memory from relevant questions does not degrade model performance, i.e., if 
    \[
        \mathbb{E}_{r \in f_{s_{j}}(q^{(t)};{\cal M})}\bigl[{\normalfont\texttt{score}}(r)
        \bigr] \ge \mathbb{E}_{r' \in f_{s_{j}}(q^{(t)};{\cal M}')}\bigl
        [{\normalfont\texttt{score}}(r') \bigr],
    \]
    for ${\cal M}'\subseteq{\cal M}$, then for any $k$, the probability of a satisfying answer appears in the first $k$ candidates, i.e., $\Prob\left(\max_{1 \le i \le k}{\normalfont\texttt{score}}(r_{i; s}^{(t)})\ge\tau\right)$, is non-decreasing by $t$. 
\end{theorem}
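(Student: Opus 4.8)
The plan is to link consecutive time steps through the monotone growth of the memory, and then transport that monotonicity first to the quality of a single candidate and finally to the probability that at least one of the first $k$ candidates is satisfying. First I would observe that the memory-construction rule in Eq.~\ref{eq:memory} produces a nested family of summarized pools: passing from step $t-1$ to step $t$ only enlarges the argument of $g_m$ by the pair $(q^{(t-1)},\Rcal_{s';m}^{(t-1)})$, so $\mathbf{M}^{(t-1)}\subseteq\mathbf{M}^{(t)}$ for every $t$. I would flag up front that this inclusion presumes $g_m$ is monotone in its argument; for append-style textual memory it is immediate, and for reflection/SFT variants it is the natural ``memory never discards correct experience'' assumption implicit in the theorem's phrasing.

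With the nesting in hand, I would instantiate the hypothesis with $\Mcal=\mathbf{M}^{(t)}$ and $\Mcal'=\mathbf{M}^{(t-1)}$. Since $\mathbf{M}^{(t-1)}\subseteq\mathbf{M}^{(t)}$, the hypothesis gives $\E_{r\in f_{s}(q^{(t)};\mathbf{M}^{(t)})}[\texttt{score}(r)]\ge\E_{r'\in f_{s}(q^{(t)};\mathbf{M}^{(t-1)})}[\texttt{score}(r')]$, and chaining this across $t$ shows the expected single-candidate score is non-decreasing in $t$.

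The crucial step is to upgrade this statement about the \emph{expected} score into one about the \emph{tail} probability $p_t:=\Prob\bigl(\texttt{score}(r^{(t)})\ge\tau\bigr)$ that a single candidate clears the threshold. The cleanest route is to run the argument on the satisfaction indicator: taking the relevant quality signal to be the $\{0,1\}$-valued event $\mathbf{1}[\texttt{score}\ge\tau]$, its expectation equals $p_t$ exactly, so monotonicity of the expectation becomes monotonicity of $p_t$. Modeling the candidates $r_{1;s}^{(t)},\dots,r_{k;s}^{(t)}$ as i.i.d.\ draws from $f_{s}(q^{(t)};\mathbf{M}^{(t)})$ then yields
\[
\Prob\!\left(\max_{1\le i\le k}\texttt{score}(r_{i;s}^{(t)})\ge\tau\right)=1-(1-p_t)^k,
\]
and because $x\mapsto 1-(1-x)^k$ is increasing on $[0,1]$, the non-decreasing $p_t$ forces the left-hand side to be non-decreasing in $t$ for every fixed $k$, which is exactly the claim.

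I expect the main obstacle to be precisely this expectation-to-tail transfer: the hypothesis constrains only the mean score, whereas the conclusion concerns threshold-crossing mass, and a larger mean need not imply more mass above $\tau$. To close the gap rigorously I would invoke one of two strengthenings — either reduce to the binary satisfaction score above, so that mean and tail coincide, or assume that the score distribution under the larger memory first-order stochastically dominates that under the smaller memory, which simultaneously delivers the mean inequality of the hypothesis and the tail inequality $p_t\ge p_{t-1}$ for every $\tau$. A secondary and more benign difficulty is justifying the i.i.d.\ (or at least exchangeable, marginally monotone) sampling of candidates: for parallel methods such as Best-of-$N$ it is immediate, but for sequential or tree-search methods I would instead argue a per-position monotonicity of each candidate's satisfaction probability and bound the max accordingly.
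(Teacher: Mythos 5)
Your proposal is correct modulo the fixes you yourself flag, and its core route is the same as the paper's proof of the independence case: show that the single-candidate tail probability $p_t=\Prob\bigl(\score(r_1^{(t)})\ge\tau\bigr)$ is non-decreasing in $t$, then apply the formula $1-(1-p_t)^k$ and its monotonicity in $p_t$. The instructive differences lie in how each side handles the expectation-to-tail step. You identify it as the crux --- the hypothesis controls only mean scores, and a larger mean need not put more mass above $\tau$ --- and you offer two legitimate repairs: take the quality signal to be the binary indicator $\mathbf{1}[\score\ge\tau]$ so that mean and tail coincide, or strengthen the hypothesis to first-order stochastic dominance. The paper, by contrast, never actually closes this gap: its simplified proof simply asserts $p_{t+1}\ge p_t$ ``because memory grows monotonically with $t$,'' without ever invoking the mean hypothesis, and its general proof tries to bridge it with the lemma $\Prob(S\ge\tau)\ge(\E[S]-\tau)/(1-\tau)$ for $S\in[0,1]$; but that application is invalid, since the lemma lower-bounds the tails at $t$ and $t+1$ by two mean-ordered quantities without ordering the tails against each other (e.g., $S_t\equiv\tau$ has tail probability $1$, while an $S_{t+1}$ with higher mean but an atom at $0$ has tail probability below $1$). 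So your diagnosis applies to the paper's own argument, and your proposed strengthenings are what would make the theorem rigorously true. Where the paper goes further than you is the dependent-candidate case: it posits a fixed generation DAG, identical across $t$, with monotone conditional score distributions at each node, and propagates stochastic dominance through a topological order to conclude via monotonicity of the max --- this is the rigorous form of the ``per-position monotonicity'' you only sketch for sequential and tree-search methods. Finally, your explicit observation that memory nesting $\mathbf{M}^{(t-1)}\subseteq\mathbf{M}^{(t)}$ requires a monotone $g_m$ makes precise an assumption the paper uses silently.
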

\begin{proof}
    Detailed proof is elaborated in Appendix~\ref{proof:2}.
\end{proof}
    Combining Theorem~\ref{thm:1} and Theorem~\ref{thm:2},  we have
    \begin{corollary}\label{corollary}
    Under the conditions in Theorem~\ref{thm:1} and \ref{thm:2}, {SpeedupLLM} achieves a
non-increasing expected compute budget while maintaining the
probability of producing a satisfying answer.
    \end{corollary}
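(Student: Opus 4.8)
The plan is to chain the two theorems through the single shared quantity $\Prob\bigl(\max_{r\in\Rcal_s^{(t)}}\texttt{score}(r)\ge\tau\bigr)$, which is precisely the conclusion delivered by Theorem~\ref{thm:2} and the hypothesis required by Theorem~\ref{thm:1}. First I would record that the memory update rule in Eq.~\eqref{eq:memory} is monotone in $t$: the set of stored question--answer pairs only grows, so $\mathbf{M}^{(t)}\subseteq\mathbf{M}^{(t+1)}$. Together with the non-degradation assumption of Theorem~\ref{thm:2}, this means each later step reasons against a memory that is a superset of an earlier one, so the relation $\mathcal{M}'\subseteq\mathcal{M}$ appearing in Theorem~\ref{thm:2} is instantiated exactly by consecutive time steps, and the theorem's per-$k$ monotonicity applies verbatim.

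Next I would invoke Theorem~\ref{thm:2}, which already folds the time dimension into its statement: for \emph{any} $k$, the quantity $\Prob\bigl(\max_{1\le i\le k}\texttt{score}(r_{i;s}^{(t)})\ge\tau\bigr)$ is non-decreasing in $t$. The bridging step is to specialize $k$ to the maximal candidate budget $K$, i.e.\ the cap on the number of candidates method $s$ may generate, so that $\max_{1\le i\le K}\texttt{score}(r_{i;s}^{(t)})=\max_{r\in\Rcal_s^{(t)}}\texttt{score}(r)$. With this identification, Theorem~\ref{thm:2} yields exactly the non-decreasing probability that Theorem~\ref{thm:1} assumes, and applying Theorem~\ref{thm:1} then gives that $\E[\cost(\Rcal_{s'}^{(t)})]$ is non-increasing in $t$, establishing the first half of the corollary.

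For the second half---maintaining the probability of producing a satisfying answer---I would return to the adaptive-allocation definition in Eq.~\eqref{eq:mobj}. Since $s'$ stops only once a candidate scoring $\ge\tau$ has appeared (or the budget is exhausted), the reduced set $\Rcal_{s'}^{(t)}$ contains a satisfying answer whenever the full set $\Rcal_s^{(t)}$ does. Hence $\Prob\bigl(\max_{r\in\Rcal_{s'}^{(t)}}\texttt{score}(r)\ge\tau\bigr)=\Prob\bigl(\max_{r\in\Rcal_s^{(t)}}\texttt{score}(r)\ge\tau\bigr)$, and by the previous paragraph this value is non-decreasing in $t$; in particular it never drops below its initial level. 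The compute budget therefore shrinks while the satisfying-answer probability is preserved, which is the claim.

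The main obstacle I anticipate is the bridging step in the second paragraph: matching the ``for any $k$'' conclusion of Theorem~\ref{thm:2} to the full candidate set appearing in Theorem~\ref{thm:1}. This requires pinning down a uniform maximal budget $K$ so that the two maxima literally coincide, and checking that the candidate indices $r_{i;s}^{(t)}$ are enumerated in the same order used by the early-stopping rule. If the budget cap varies with $t$, or if the candidate ordering is not consistent across $t$, then the clean identification of the two probabilities would need a supplementary uniformity argument; I would resolve this either by fixing $K$ as a global hyperparameter of method $s$ or by taking a supremum over admissible $k$ before applying Theorem~\ref{thm:1}.
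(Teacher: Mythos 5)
Your proposal is correct and follows essentially the same route as the paper, which proves the corollary simply by chaining the two theorems: Theorem~\ref{thm:2} supplies the non-decreasing threshold-exceedance probability that is the hypothesis of Theorem~\ref{thm:1}, which in turn yields the non-increasing expected compute budget, while the constraint in Eq.~\eqref{eq:mobj} ensures the satisfying-answer probability is preserved. Your treatment is in fact more careful than the paper's (which states the combination without detail), particularly in flagging and resolving the identification of the ``for any $k$'' conclusion of Theorem~\ref{thm:2} with the full candidate set appearing in Theorem~\ref{thm:1} via a fixed maximal budget $K$.
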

    
    It shows that, by integrating adaptive compute allocation and memory-augmented reasoning, {SpeedupLLM} can yield a decreasing trend in the compute budget of reasoning.

    \subsection{Specification to Different Test-Time Scaling Methods}\label{sec:specification}


    Next, we show how {SpeedupLLM} can be specified for different test-time scaling strategies. We explore four representative streams of test-time scaling methods: Best-of-N, Tree-of-Thoughts \citep{yao2023tree}, Self-Refine \citep{madaan2023self}, and Long Chain-of-Thought (Long CoT) \citep{chen2025towards, guo2025deepseek}. These modifications make each scaling strategy adaptive, allowing them to allocate compute budget based on the LLM's familiarity with the questions.


    \textbf{Best-of-N}. In this method, the minimal unit of generation after which a complete answer can be evaluated is a whole answer. Thus, $\texttt{cost}(\cdot)$ is measured by the number of generated and evaluated answers. $\texttt{score}(\cdot)$ is provided by an LLM Judge \citep{zheng2023judging} or a Process Reward Model \citep{lightman2023let, zhang2025lessons}. Optimizing Eq.~\ref{eq:mobj} is practically performed by sequentially (or in batches) generating and scoring each answer. The generation stops once an acceptable answer is produced.

\textbf{Tree-of-Thoughts}. In this method, the minimal unit of generation after which a complete answer can be evaluated is a node in the search tree (often representing a reasoning step). Thus, $\texttt{cost}(\cdot)$ is evaluated as the number of generated and evaluated nodes. Similar to Best-of-N, $\texttt{score}(\cdot)$ is given by an LLM Judge \citep{zheng2023judging} or a Process Reward Model \citep{lightman2023let, zhang2025lessons}. To practically optimize Eq.~\ref{eq:mobj}, when expanding each node in the tree search, we sequentially evaluate each node; once we encounter a node with an above-threshold score, we prune the following nodes and expand the current node. Note that this approach also unifies DFS and BFS.

    \textbf{Self-Refine}. This method is intrinsically compute adaptive.
    In this method, the minimal unit is one whole refined answer. Thus, $\texttt{cost}(\cdot)$ is evaluated as the number of generated and evaluated answers.
    $\texttt{score}(\cdot)$ is given by the model itself or an external LLM. Since this
    method automatically stops when a satisfying answer appears, there is no
    specific modification required to optimize Eq.~\ref{eq:mobj}.

    \textbf{Long CoT}. Long CoT reasoning conducts free-form reasoning processes that implicitly incorporate self-refinement and tree-search strategies within the text generation space
    \citep{chen2025towards}. This approach allows the model to continue generating content until it determines that a satisfactory answer has been reached, eliminating the need for predefined stopping tokens such as "wait" \citep{sui2025stop}; thus, it is also an inherently adaptive method. Since this is a text-space reasoning method, the $\texttt{cost}(\cdot)$ is evaluated as the total number of generated
    tokens, and $\texttt{score}(\cdot)$ is assessed by the model's own estimation of the next-token probability, i.e., to generate another ``wait'' to continue thinking or stop with the current answer. This method also does not require specific modification to be compute budget-adaptive.

 \section{Experimental Setup}

    \textbf{Benchmarking Dimensions.} To explore the LLM reasoning speedup behaviour, we conduct experiments along three
    dimensions: 1) Question Similarity, 2) Memory Method, and 3) Scaling Method.
    The details are introduced as follows.

    \textbf{Data.} In this study, we create a dataset covering four levels of similarity
    as defined in Section~\ref{sec:problem}.
    We sample 10 questions from the MATH dataset to serve as the backbone to create
    similar questions. For each of these four similarity levels, these backbone questions
    are extended into a set of 20 questions per backbone through a combination of large
    language models (LLM) and programming-based calculations of the results, to
    ensure the correctness of the answer. Examples of questions
    with different similarities are given in Appendix~\ref{append:data}.

    \textbf{Memory Methods.} In this study, we explore both parametric-form and textual-form memory \citep{zhang2024survey}.
    In addition to the baseline of no memory mechanism, we conduct experiments to
    evaluate five memory methods: one parametric method (SFT) and four text-based
    memory methods. 

    \begin{itemize}[leftmargin=.3in]
        \setlength{\itemsep}{0em}

        \item \textbf{No Memory} (Baseline): Questions are processed individually without memory mechanism.
        \item \textbf{SFT} (Supervised Fine-Tuning) as memory \citep{shao2023character, wang2023huatuo, yang2023investlm}: The past question
            and generated answer pairs are used as data to perform supervised fine-tuning
            on the model.

        \item \textbf{In-Context} \citep{zhao2024expel, huang2023recommender, wang2023enhancing}: The past questions and generated answers are used as in-context examples to guide the reasoning.

        \item \textbf{Reflection}: LLMs self-summarize past experiences and summarize
            rules to guide the reasoning. We test three variants: 1) \textbf{Reflect} (individually) \citep{zhong2024memorybank, packer2023memgpt}: The LLM individually reflects on each previous question
            and answer pair and summarizes experience from each one. 2) \textbf{Multi-Case
            Reflect} \citep{zhao2024expel, tack2024online}: The LLM is asked to summarize experience from multiple
            previous question and answer pairs. 3) \textbf{Reflect-Update} \citep{hu2023chatdb, shinn2023reflexion}: The LLM
            maintains a reflection by updating it with each new question and answer
            pair.
    \end{itemize}
    \textbf{Metrics.} We evaluate 1) allocated \textit{compute budget} and 2) \textit{accuracy} on
    each question. For each single question, these two metrics are averaged over four runs.

    \textbf{Implementation Details.} See Appendix~\ref{append:implement}.

    \section{Results}

    \begin{figure*}[t]

        \centering
        \begin{tabular}{c}
            \includegraphics[width=1.0\textwidth, trim=218 36 80 20, clip]{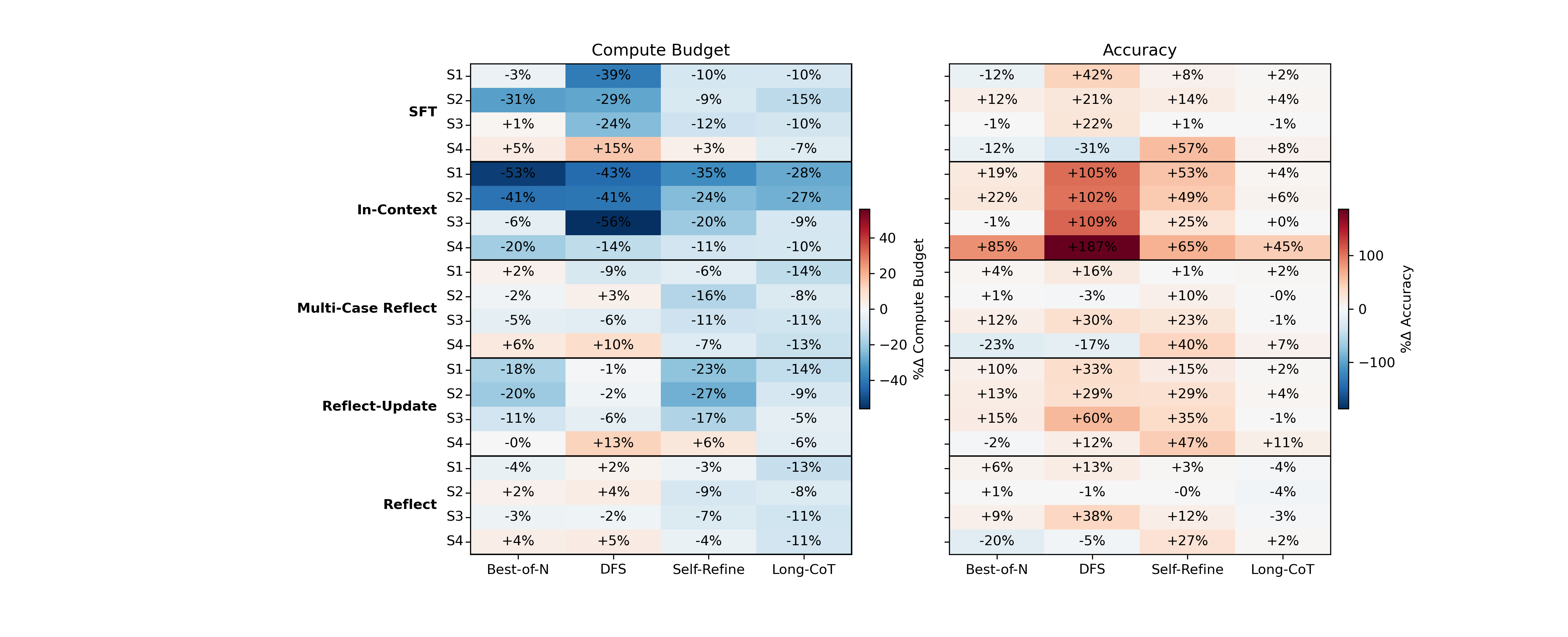}
        \end{tabular}

        \caption{Changes in compute budget and accuracy relative to the baseline
        without memory mechanisms, across each memory method, test-time scaling
        method, and question similarity level. Values represent the relative
        compute budget and accuracy, averaged over all question backbones and variations.}
        \label{fig:avg_heatmap}
    \end{figure*}

    \afterpage{\clearpage}
    \begin{figure}[p]
        \centering
        \begin{subfigure}
            [b]{0.45\textwidth}
            \includegraphics[width=\linewidth, trim=60 0 20 0, clip]{
                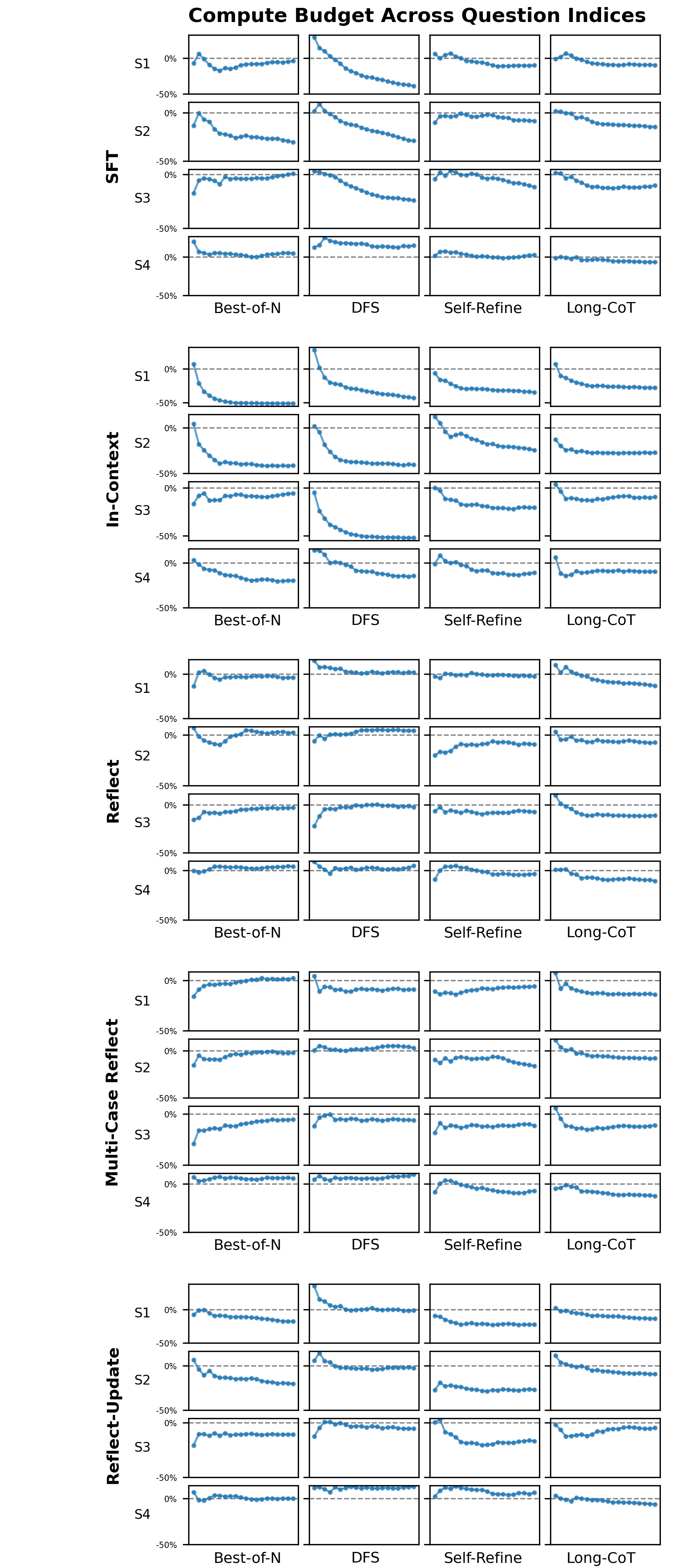
            }
            \caption{\textbf{Allocated Compute}}
        \end{subfigure}
        \hfill
        \begin{subfigure}
            [b]{0.45\textwidth}
            \includegraphics[width=\linewidth, trim=60 0 20 0, clip]{
                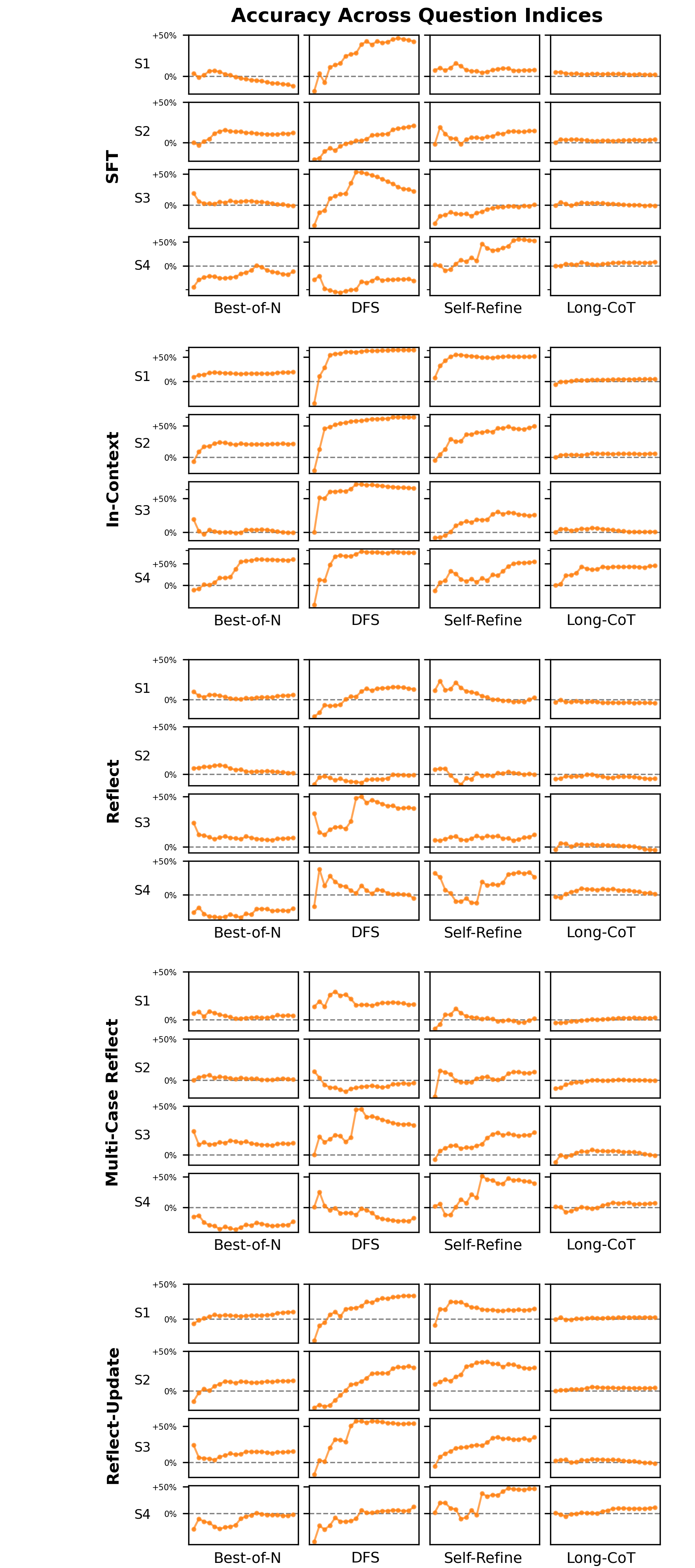
            }
            \caption{\textbf{Accuracy}}
        \end{subfigure}

        \caption{Changes in (a) compute budget and (b) accuracy relative to the
        baseline (no memory), across different memory methods, test-time scaling
        methods, and question similarity levels. In each subplot, the x-axis denotes
        the question index within a sequence of questions, while the y-axis shows the
        percentage change in compute budget or accuracy compared to the baseline.
        Each curve presents values averaged over multiple question sets at each question
        index. Gray dashed lines represent baseline (no memory) performance.}
        \label{fig:all_curve}
    \end{figure}




    We first present the compute budget trends across different levels of question similarity, reasoning strategies, and memory methods Fig.~\ref{fig:all_curve}. For a clearer comparison, we aggregate the compute budget over question indices within each question set, as shown in Fig.~\ref{fig:avg_heatmap}. The following sections provide detailed analyses along each experimental dimension. We first answer the central question of the possibility of LLM reasoning speedup in Section~\ref{res:possibility}, then we explore the question similarity's effect in Section~\ref{res:similarity}, the memory method's effects in Section~\ref{res:memory}, and finally the test-time scaling method's effects in Section~\ref{res:scaling}.

    \subsection{Possibility of LLM Reasoning Speedup}\label{res:possibility}


    \noindent
    \textit{\textbf{Finding 1: LLMs can achieve
    reasoning speedup through past experience.}}
    
    As shown in Fig.~\ref{fig:avg_heatmap}, the left panel demonstrates significant
    reductions in compute budget, with up to a 56\% reduction when using the combination of \textit{In-Context} memory and \textit{DFS} reasoning in the similarity level S3, and there is at least a 10\% reduction in 47.5\% settings. Additionally, such reasoning speedup behaviour consistently occurs across 64 out of 80 settings (\sethlcolor{Cyan!15}\hl{cool cells} in left panel), demonstrating that reasoning speedup is a general behaviour for LLMs when equipped with memory and adaptive compute allocation.

    \vspace{1ex}
    \noindent
    \textbf{\textit{Finding 2. Response speed and accuracy are strongly
    correlated; faster responses tend to be more accurate.}}

    Fig.~\ref{fig:avg_heatmap} shows a clear correlation between reasoning
    efficiency (compute budget) and accuracy. Specifically, regions with deeper blue in the left
    panel (more reductions in compute cost) often correspond to regions with deeper
    red in the right panel (more accuracy improvements). Upon examination, the relative
    compute budget and accuracy have a Pearson correlation of -0.41 with p=0.0002,
    suggesting a moderate and statistically significant negative correlation. These observations suggest
    that enhancing reasoning speed does not sacrifice, but instead improves answer correctness. This is because test-time scaling methods suffer from the gap between the estimated answer quality score and actual correctness, but with correct memory, this issue can be alleviated, leading to improved accuracy.

    \subsection{Question Similarity: The Boundary of Reasoning Speedup}\label{res:similarity}

    \begin{wrapfigure}
        {r}{0.45\textwidth}
        \centering

        \includegraphics[width=\linewidth]{
            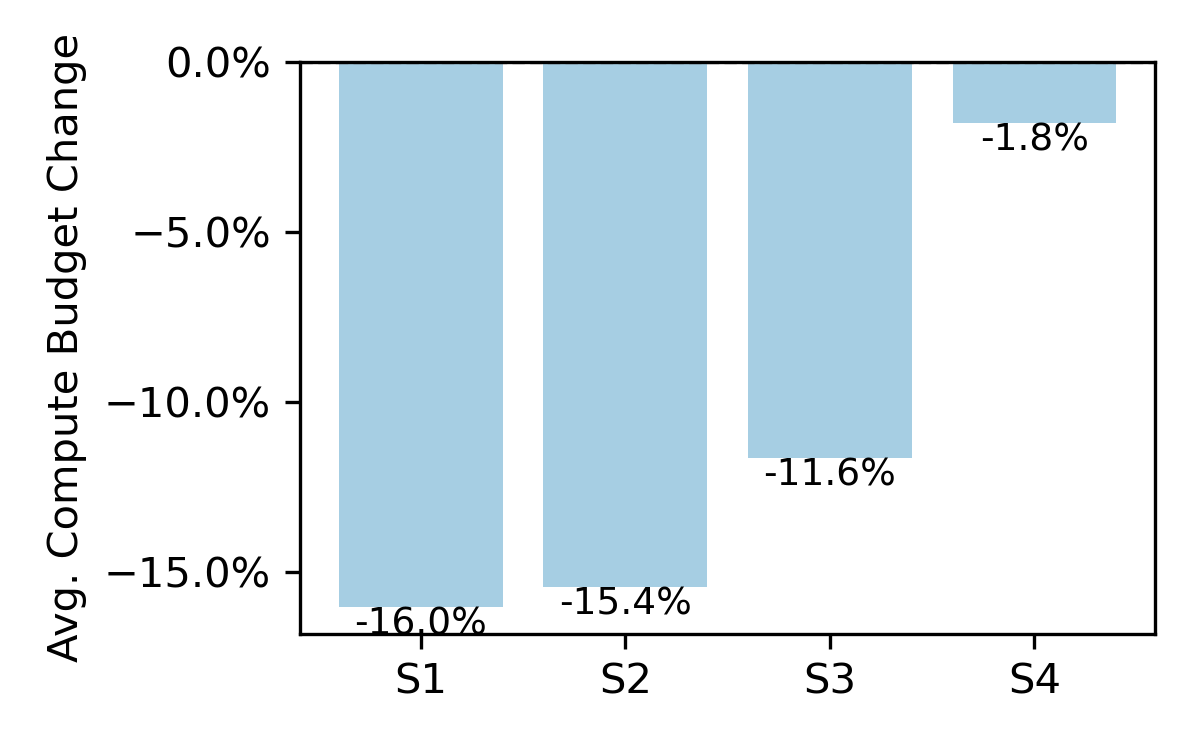
        }

        \caption{Compute budget changes grouped and averaged by question
        similarity levels.}

        \label{fig:sim_avg}
    \end{wrapfigure}

    \textit{\textbf{Finding 3. Reasoning efficiency gains increase with question
    similarity.}}

    We also find that reasoning speedup is more pronounced when questions are more similar. To highlight this effect, we present Fig.~\ref{fig:sim_avg},
    where all results from Fig.~\ref{fig:avg_heatmap} are grouped and averaged by
    their similarity levels. As shown, the reductions in compute budget are most significant in more similar question groups (S1: 16.0\%, and S2: 15.4\%). As more
    details are shown in Fig.~\ref{fig:avg_heatmap}~(left), such pattern is consistent
    across most memory methods, especially \textit{SFT}, \textit{In-Context}, and \textit{Reflect-Update}.
    This pattern is largely expected, since it aligns with patterns observed in human cognition, where efficiency improves larger for more similar tasks.

    \noindent
    \textit{\textbf{Finding 4. When question similarity is low, memory mechanisms can cause a performance drop.}}

    While memory mechanisms generally reduce compute costs for similar questions, Fig.~\ref{fig:avg_heatmap} shows that under low similarity, some methods can increase compute costs and degrade accuracy, as indicated by \sethlcolor{Peach!20}\hl{warm cells} in the left panel and \sethlcolor{Cyan!15}\hl{cool cells} in the right panel, such patterns mostly appears in low similarity question groups (S4). This is because when the
    questions and answers in memory differ substantially from the current query,
    the model can overfit to irrelevant examples from memory
    \citep{zhang2024uncovering}, and repeated reliance on a narrow set of memories can trigger catastrophic forgetting, reducing the model’s ability to generalize
    \citep{luo2023empirical}.

    \subsection{The Effects of Memory Methods}\label{res:memory}

    \begin{wrapfigure}
        {r}{0.45\textwidth}
        \centering
        \includegraphics[width=\linewidth]{
            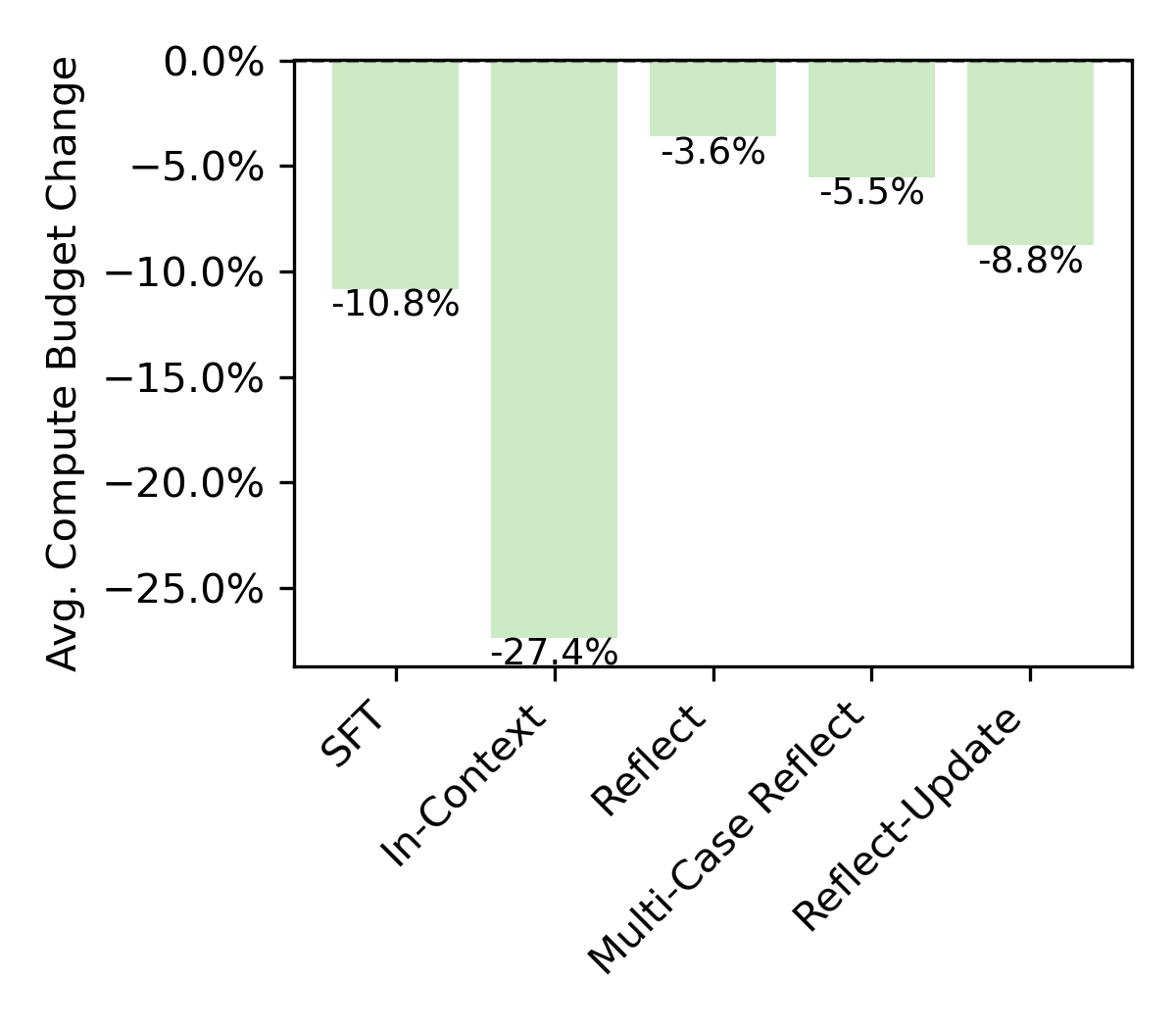
        }
        \vspace{-3mm}
        \caption{Compute budget changes grouped and averaged by memory methods.}
        \label{fig:avg-memory}
    \end{wrapfigure}
    \textit{\textbf{Finding 5. Episodic memory methods generally outperform
    semantic memory methods in LLM reasoning speedup.}}

    In Fig.~\ref{fig:avg-memory}, we show the relative compute budget values of
    each memory method, averaged across different similarity levels and test-time
    scaling methods. Generally, we find that episodic memory methods (\textit{SFT}: 10.8\%, \textit{In-Context} 27.4\%) reduce compute budgets more effectively than semantic (reflection-based) methods (3.6\%, 5.5\%, 8.8\%).
    This aligns with previous studies showing that comprehensive recall of past experience is important for benefiting LLMs in problem solving \citep{renze2024self}. Similarly, psychological research indicates that human proficiency initially relies on episodic memory, which
    allows for instance-based retrieval \citep{logan1988toward}. This suggests that episodic memory may play a critical role in initial LLM reasoning efficiency improvements.

    \vspace{1ex}
    \noindent
    \textbf{\textit{Finding 6. In-context learning yields greater improvements
    in both efficiency and accuracy than supervised fine-tuning.}}

    Comparing the \textit{in context} and \textit{SFT} rows in Fig.~\ref{fig:avg-memory}
    (also in Fig.~\ref{fig:avg_heatmap}), we find that \textit{In-Context} memory consistently gives greater reductions in compute budget and improvements in accuracy across all scaling methods
    and similarity levels. Similarly, in Fig.~\ref{fig:all_curve}~(a), the curves
    also show more consistent trends of reducing compute budget for \textit{In-Context}
    compared to \textit{SFT}, indicating that in-context memory adapts more efficiently in LLM reasoning speedup. This relates to a comparison of SFT and ICL (In-Context Learning) when the data is very few (1 to 3-shot), where SFT faces either insufficient data for adaptation \citep{luo2023empirical}
    or is prone to overfitting \citep{zhang2024scaling}, and ICL has better
    generalizability \citep{yin2024deeper}.

    \vspace{1ex}

    \begin{wrapfigure}
        {r}{0.5\textwidth}
        \centering
        \vspace{-6mm}
        \includegraphics[width=\linewidth]{
            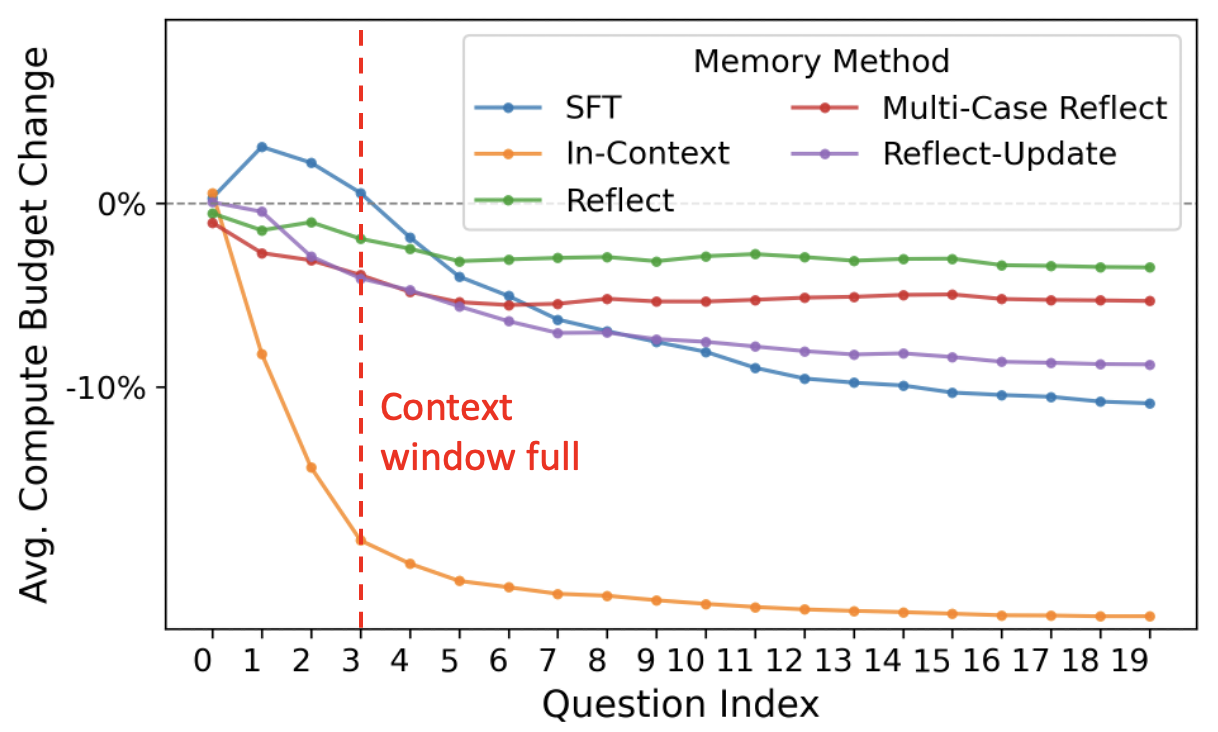
        }
        \vspace{-3mm}
        \caption{Compute budget change in each question index, grouped and averaged
        by memory methods.}
        \label{fig:curve_avg}
        \vspace{-3mm}
    \end{wrapfigure}
    \textit{\textbf{Finding 7. Text‑based memory methods plateau when the context window is full, whereas parametric memory continues to achieve reasoning speedup.}}

    In Fig.~\ref{fig:curve_avg}, we
    show the relative compute budget averaged across different similarity levels and test-time scaling
    methods. The results reveal that memory methods relying on in-context textual
    memory, including \textit{In-Context}, \textit{Reflect}, and \textit{Multi-Case
    Reflect}, exhibit diminishing efficiency gains once the context window
    becomes full (index=3). In contrast, \textit{SFT}, which saves memory into the model parameters, shows more consistent efficiency gains across the entire sequence, as it is not limited by the context window size.

    \noindent
    \textit{\textbf{Finding 8. Generalizable reflections can better help reasoning speedup.}}

    Among reflection-based methods, \textit{Reflect-Update} achieves noticeably
    stronger and more stable improvements than \textit{Reflect} and \textit{Multi-Case
    Reflect} in both efficiency and accuracy, as shown in Fig.~\ref{fig:avg-memory}
    and Fig.~\ref{fig:avg_heatmap}. Upon examining the model-generated
    reflections, we found that the reflections generated by \textit{Reflect} and
    \textit{Multi-Case Reflect} often contain specific numbers or calculations (some examples are shown in Appendix~\ref{append:res}),
    which may not be generalizable across different questions, while \textit{Reflect-Update} tends to generate
    more general reflections, reducing the risk of using unrelated calculations
    to mislead the answering of the current question. This also implies that a tradeoff between generalizability and informativeness needs to be considered in future work when trying to design better reflection-based methods for LLM reasoning speedup.

    \subsection{The Effects of Test-Time Scaling Methods}\label{res:scaling}
    \textit{\textbf{Finding 9. The effect of different test-time scaling methods
    is correlated with memory methods.}}

    Based on our results, there is no single best test-time scaling method for enabling
    LLM reasoning speedup. As shown in Fig.~\ref{fig:avg_heatmap}, the effectiveness
    of scaling methods is closely related to the type of memory method used. When
    using episodic memory methods such as \textit{SFT} and \textit{In-Context},
    \textit{DFS} appears to be the most effective in reducing compute budgets.
    In contrast, when using reflection-based memory methods, \textit{Self-Refine}
    and \textit{Long CoT} yield better efficiency gains.
    Given the strong influence of the scaling method on the overall LLM
    performance, which is also evidenced by the fact that many state-of-the-art models
    such as OpenAI o1 \citep{jaech2024openai} and DeepSeek-R1 \citep{guo2025deepseek}
    adopt \textit{Long CoT}, it is important to recognize that the choice of scaling methods should not be the first consideration. However, for \textit{Long CoT},
    we do observe that different memory methods show similar efficiency improvements,
    with \textit{In-Context} performing slightly better. For the exploration of strategies that better combine memory methods with test-time scaling methods,
    particularly under \textit{Long CoT} settings, we leave it to future work.

    \section{Conclusion, Limitations, and Future Work}
    In this study, we raised and formally defined the question of whether large language models (LLMs) can achieve reasoning speedup through repeated exposure. To address this, we proposed {SpeedupLLM}, a unified framework for implementing and benchmarking reasoning speedup behaviors in LLMs. Through extensive experiments, we observed that reasoning speedup generally emerges across different memory mechanisms and test-time scaling methods, particularly when the questions exhibit higher similarity. Additionally, we provided several insights into the factors that influence such behaviors.
    
    This work has several limitations. First, our exploration of memory mechanisms focused on widely used methods, but it cannot cover all recent advances, e.g., compact episodic memory formats. For Long-CoT reasoning, we did not identify memory methods that significantly reduce the compute budget.
    Future work should explore more effective approaches for reasoning speedup in Long-CoT reasoning models. Exploring how to leverage in-context episodic memory while overcoming the limitations of context window length also represents a promising direction.

\bibliography{main}

\appendix
    \section{Proof of Theorems and Corollary}
    \subsection{Proof of Theorem 1}\label{proof:1}
    
        \begin{proof}

            From the assumptions we have:
            \[
                \Prob\left(\max_{1 \le j \le k}\texttt{score}(r_{j}^{(t+1)}) \ge
                \tau\right) \ge \Prob\left(\max_{1 \le j \le k}\texttt{score}(r_{j}
                ^{(t)}) \ge \tau\right).
            \]

            So the cumulative distribution of $|\mathcal{R}^{(t)}|$ satisfies:
            \[
                \Prob\bigl(|\mathcal{R}^{(t)}| \le k\bigr) = \Prob\left(\max_{1
                \le j \le k}\texttt{score}(r_{j}^{(t)}) \ge \tau\right).
            \]
            Since $\texttt{cost}(\mathcal{R})$ is non-decreasing with $|\mathcal{R}|$, the expected cost satisfies

            \[
                \E\bigl[\texttt{cost}(|\mathcal{R}^{(t+1)}|)\bigr] \le \E\bigl[\texttt{cost}(|\mathcal{R}
                ^{(t)}|)\bigr]. 
            \]
        \end{proof}

    \subsection{Proof of Theorem 2}\label{proof:2}
    We first consider a simplified setting, where we assume the independence of each $r\in \Rcal_{s} ^{(t)}$. Note this is valid without simplification for independent sampling test-time scaling methods, e.g. Best-of-N.
    \begin{proof}\label{proof:2:actual}
    Let
    \(p_t := \Prob\bigl(\score(r_1^{(t)})\ge\tau\bigr)\).
    Because memory grows monotonically with $t$, we have $p_{t+1}\ge p_t$. Under the independence assumption,
    \[
      \Prob\left(\max_{1\le j\le k}\score(r_j^{(t)})\ge\tau\right)
      = 1 - (1-p_t)^{k},
    \]
    As this function is monotonically increasing in $p_t$, we conclude
    \[
      1-(1-p_{t+1})^{k}\;\;\ge\;\;1-(1-p_t)^{k},
    \]
    which proves the claim
    $$\Prob\left(\max_{1 \le j \le k}\texttt{score}(r_{j}^{(t+1)})\ge\tau\right)>\Prob\left(\max_{1 \le j \le k}\texttt{score}(r_{j}^{(t)})\ge\tau\right).$$

    \end{proof}




    Next, we generalize the proof by removing the independence assumption and considering the case where candidates $r_j^{(t)} \in \Rcal_s^{(t)}$ are dependent, but satisfy a natural topological generation structure. We further assume:

\begin{enumerate}
    \item[\textbf{(A1)}]   
    There exists a directed acyclic graph (DAG)
    $\mathcal{G}=(V,E)$ on the index set $V=\{1,\dots,k\}$ that is
    \emph{identical} for $t$ and $t+1$.  
    We write
    $
      r_{i}^{(t)} \,\leadsto\, r_{j}^{(t)}
    $
    if $(i,j)\in E$, i.e.\ candidate $r_{j}^{(t)}$ \emph{relies on}
    $r_{i}^{(t)}$.  
    For each $j$ let $\mathcal{P}(j)=\{\,i:(i,j)\in E\}$ be its (possibly
    empty) parent set.


    \item[\textbf{(A2)}] 
    For any non-root node $j$ the conditional distribution of
    $\score(r_{j}^{(t)})$ is monotone in its parents’ scores:
    if $\mathbf{s}\succcurlyeq\mathbf{s}'$ coordinate-wise then
    \[
        \score\bigl(r_{j}^{(t)}\mid\score(\mathcal{P}(j))=\mathbf{s}\bigr)
        \;\gst\;
        \score\bigl(r_{j}^{(t)}\mid\score(\mathcal{P}(j))=\mathbf{s}'\bigr).
    \]
    Consequently, whenever every parent’s score distribution
    stochastically improves, so does the child’s.
\end{enumerate}

\begin{proof}
We first convert the \emph{mean}-improvement assumption  
\[
   \E\bigl[\score(r_j^{(t+1)})\bigr]
   \;\ge\;
   \E\bigl[\score(r_j^{(t)})\bigr]  \tag{B0}
\]
into a tail-probability guarantee for every \emph{root} node
\(j\) (nodes with \(\mathcal{P}(j)=\varnothing\)).
When scores are normalized to the unit interval,
the following elementary bound holds.

\begin{lemma}\label{lem:mean2tail}
Let \(S\in[0,1]\) and \(\tau\in(0,1)\). Then
\[
   \Prob(S\ge\tau)\;\;\ge\;\;\dfrac{\E[S]-\tau}{1-\tau}.
\]
\end{lemma}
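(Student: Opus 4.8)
The plan is to prove this as a one-sided truncation bound, i.e.\ a \emph{reverse Markov inequality}. The guiding intuition is that $S$ is capped at $1$, so if the mean $\E[S]$ sits above the threshold $\tau$, then the ``excess'' mass above $\tau$ must be carried entirely by the event $\{S\ge\tau\}$, and each such outcome can contribute at most $1$. This forces that event to have nonnegligible probability, and quantifying ``how much'' gives exactly the stated bound.

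Concretely, first I would set $A=\{S\ge\tau\}$ and write $p:=\Prob(A)$. The key step is to split the expectation over $A$ and its complement,
\[
  \E[S] = \E[S\,\mathbf{1}_{A}] + \E[S\,\mathbf{1}_{A^{c}}],
\]
and then bound each piece using the boundedness of $S$. On $A$ I use $S\le 1$, so $\E[S\,\mathbf{1}_{A}]\le p$; on $A^{c}=\{S<\tau\}$ I use $S<\tau$, so $\E[S\,\mathbf{1}_{A^{c}}]\le \tau(1-p)$. Adding these gives
\[
  \E[S] \;\le\; p + \tau(1-p) \;=\; \tau + p\,(1-\tau).
\]

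Finally, since $\tau\in(0,1)$ guarantees $1-\tau>0$, I would rearrange to isolate $p$, yielding $p\ge(\E[S]-\tau)/(1-\tau)$, which is the claim. No case split is required: if the right-hand side is negative the inequality is vacuous, and it holds automatically because $p\ge 0$.

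I do not expect a genuine obstacle here, as the argument is elementary; the only points demanding care are (i) applying the correct bound on each half of the decomposition ($S\le 1$ on the upper event versus $S\le\tau$ on the lower event) and (ii) confirming that dividing by $1-\tau$ is legitimate, which is exactly why the hypothesis $\tau<1$ is imposed. Once established, the lemma is deployed in the surrounding argument by applying it with $S=\score(r_{j}^{(t)})$ at each root node, converting the mean-improvement hypothesis (B0) into a tail-probability guarantee; assumption (A2) then propagates this stochastic improvement downward along the DAG to all non-root candidates.
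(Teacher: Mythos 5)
Your proof is correct and takes essentially the same route as the paper's: both decompose $\E[S]$ over the event $\{S\ge\tau\}$ and its complement, bound $S$ by $1$ on the former and by $\tau$ on the latter, and rearrange $\E[S]\le\tau+(1-\tau)\Prob(S\ge\tau)$. Your use of indicator functions rather than conditional expectations is a cosmetic difference (and sidesteps the degenerate case where one of the events has probability zero), but the argument is the same.
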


\begin{proof}[Proof of Lemma~\ref{lem:mean2tail}]
Write
\(
  \E[S]=
  \E[S\mid S\ge\tau]\Prob(S\ge\tau)+
  \E[S\mid S<\tau]\bigl(1-\Prob(S\ge\tau)\bigr).
\)
Because \(S\le1\) and \(S<\tau\) on the second event,
\(\E[S\mid S<\tau]\le\tau\). Hence  
\(
  \E[S]\le
  \Prob(S\ge\tau)+
  \tau\bigl(1-\Prob(S\ge\tau)\bigr)
  =\tau+(1-\tau)\Prob(S\ge\tau),
\).
\end{proof}

\smallskip
Applying Lemma~\ref{lem:mean2tail} to~(B0) gives, for every root \(j\),
\[
   \Prob\bigl(\score(r_j^{(t+1)})\ge\tau\bigr)
   \;\ge\;
   \Prob\bigl(\score(r_j^{(t)})\ge\tau\bigr) \quad \forall j\;\text{s.t.}\;\mathcal{P}(j)=\varnothing,
\]
i.e.\ \(\score(r_j^{(t+1)})\gst\score(r_j^{(t)})\).  

Then, we traverse the fixed dependency DAG \(\mathcal{G}\) in
a topological order.  
Assume for every parent \(i\in\mathcal{P}(j)\) of the current node \(j\)
that
\(
  \score(r_i^{(t+1)}) \gst \score(r_i^{(t)}).
\)
By construction the joint parent-score vector at \(t+1\) stochastically dominates the
one at \(t\).  
Under the \emph{monotone-reliance} assumption (B3),
this dominance propagates to the child:
\[
   \score(r_j^{(t+1)}) \gst \score(r_j^{(t)}).
\]
Thus the induction hypothesis extends to every successor node.
Proceeding through the whole order yields
\[
   \score(r_j^{(t+1)}) \gst \score(r_j^{(t)})
   \quad\forall j.
\]

Thus, the mapping
\(g(\mathbf{x})=\max_{1\le j\le k}x_j\)
is monotone, i.e.
\[
   \max_{j}\score(r_j^{(t+1)}) \gst \max_{j}\score(r_j^{(t)}).
\]
So for every threshold \(\tau\), we have
\[
   \Prob\Bigl(
      \max_{1\le j\le k}\score(r_j^{(t+1)})\ge\tau
   \Bigr)
   \;\ge\;
   \Prob\Bigl(
      \max_{1\le j\le k}\score(r_j^{(t)})\ge\tau
   \Bigr).
\]
\end{proof}

\section{Test-time Scaling Methods}\label{append:scaling_method}
 In this appendix, we elaborate how several widely-used test-time scaling methods can be expressed within the unified framework introduced in Section~X.
Let $f_s(q^{(t)})$ denote the set of candidate answers $\mathcal{R}_s^{(t)}$ generated by applying test-time scaling method $s \in \mathcal{S}$ to question $q^{(t)}$. Each method defines a minimal unit of computation, a corresponding cost function $\texttt{cost}(\cdot)$, and a score function $\texttt{score}(\cdot)$ to evaluate answer quality.
\vspace{1ex}

\noindent\textbf{Best-of-N.}
This method generates $N$ complete candidate answers. Each $r_{k;s}^{(t)} \in \mathcal{R}{s}^{(t)}$ is an independently sampled answer.
\begin{itemize}
\item \textit{Unit of computation:} full answer.
\item \textit{Cost:} number of generated answers, i.e., $|\mathcal{R}s^{(t)}|$.
\item \textit{Score:} LLM Judge or Process Reward Model assigns $\texttt{score}(r)$ for each candidate $r$.
\item \textit{Final answer:} $\argmax{r \in \mathcal{R}s^{(t)}} \texttt{score}(r)$.
\end{itemize}
This method directly aligns with our unified formulation, with $f_s$ sequentially or in batches generating $r{1}, r{2}, \dots, r_{N}$ until an answer satisfying the threshold is found.
\vspace{1ex}

\noindent\textbf{Tree-of-Thoughts.}
This method performs structured reasoning by expanding nodes in a search tree.
\begin{itemize}
\item \textit{Unit of computation:} tree node.
\item \textit{Cost:} number of generated and scored nodes.
\item \textit{Score:} each node is scored via LLM Judge or reward model.
\item \textit{Final answer:}  the highest-scoring reasoning path.
\end{itemize}
Our formulation covers this by letting $f_s$ denote the expansion of search tree nodes, with $\mathcal{R}_s^{(t)}$ representing partial reasoning paths. Both DFS and BFS are special cases, differing in node expansion order.
\vspace{1ex}

\noindent\textbf{Self-Refine.}
This method iteratively generates improved answers based on previous attempts.
\begin{itemize}
\item \textit{Unit of computation:} full refined answer.
\item \textit{Cost:} number of refinement steps (full answers).
\item \textit{Score:} internally assessed (e.g., by the model itself) or by an external judge.
\item \textit{Final answer:} the most refined answer exceeding the threshold.
\end{itemize}
Self-Refine is inherently adaptive: $f_s$ produces a sequence of revised answers until $\texttt{score}(r) \ge \tau$. No explicit compute optimization is required.
\vspace{1ex}

\noindent\textbf{Long CoT (Chain-of-Thought).}
This method enables free-form, unbounded reasoning over long text segments.
\begin{itemize}
\item \textit{Unit of computation:} token.
\item \textit{Cost:} number of generated tokens.
\item \textit{Score:} assessed implicitly by the model (e.g., via stopping probabilities).
\item \textit{Final answer:} the whole generated sequence.
\end{itemize}
This method integrates self-refinement and tree-search behavior into token-level generation. $\mathcal{R}_s^{(t)}$ here can be seen as the full text trace, and $\texttt{score}(\cdot)$ may correspond to internal confidence or the model’s own stop criterion (e.g., omitting “wait” tokens).

\section{Data Examples}\label{append:data}
\textbf{Question Backbone}: 
\begin{lstlisting}
Let $\mathbf{a} = \begin{pmatrix} 1 \\ 1 \\ 0 \end{pmatrix}$ and $\mathbf{b} = \begin{pmatrix} 2 \\ 0 \\ -1 \end{pmatrix}.$  Find the vector $\mathbf{v}$ that satisfies $\mathbf{v} \times \mathbf{a} = \mathbf{b} \times \mathbf{a}$ and $\mathbf{v} \times \mathbf{b} = \mathbf{a} \times \mathbf{b}.$
\end{lstlisting}

\textbf{S1}: (Exactly the same with the question backbone.)

\textbf{S2} (Same numbers, different wording):
\begin{lstlisting}
Given the vectors $\\mathbf{a} = \\begin{pmatrix} 1 \\\\ 1 \\\\ 0 \\end{pmatrix}$ and $\\mathbf{b} = \\begin{pmatrix} 2 \\\\ 0 \\\\ -1 \\end{pmatrix}$, determine the vector $\\mathbf{v}$ that meets the conditions $\\mathbf{v} \\times \\mathbf{a} = \\mathbf{b} \\times \\mathbf{a}$ and $\\mathbf{v} \\times \\mathbf{b} = \\mathbf{a} \\times \\mathbf{b}$.
\end{lstlisting}

\textbf{S3} (Same structure, different numbers):

\begin{lstlisting}
Let $\\mathbf{a} = \\begin{pmatrix} 0 \\\\ -3 \\\\ -2 \\end{pmatrix}$ and $\\mathbf{b} = \\begin{pmatrix} -1 \\\\ -3 \\\\ 0 \\end{pmatrix}.$  Find the vector $\\mathbf{v}$ that satisfies $\\mathbf{v} \\times \\mathbf{a} = \\mathbf{b} \\times \\mathbf{a}$ and $\\mathbf{v} \\times \\mathbf{b} = \\mathbf{a} \\times \\mathbf{b}.
\end{lstlisting}

\textbf{S4} (Same underlying knowledge, different structure and numbers):
\begin{lstlisting}
Find the scalar $k$ such that the points $(1, k)$, $(k, 2)$, and $(3, 4)$ are collinear.
\end{lstlisting}

\section{Implementation Details}\label{append:implement}
We develop a unified test-time reasoning framework in Python, using PyTorch and Hugging Face's Transformers. Our experiments are conducted on machines with NVIDIA A100, H100, H200, L40S and L4 GPUs. 

\textbf{Base Model.} Experiments on Best-of-N, DFS and Self-Refine are conducted on the Llama-3.1-8B model, and experiments on Long CoT are conducted on the DeepSeek-R1-Distill-Qwen-7B model. For value estimation, we optionally use a separate lightweight LLM, \texttt{gpt-4o-mini}. For all generation, we use temperature of 0.7 and top\_p of 0.9.

\paragraph{Test-Time Scaling Methods.} Below, we detail their core mechanisms, configurable parameters, and termination criteria:

\begin{itemize}
    \item \textbf{Best-of-N}: Generates $N$ candidates (set via \texttt{--n\_generate\_sample}, default = 5). Each is scored using a value model or PRM (\texttt{--method\_evaluate}), and a candidate is selected based on value and correctness.  
    \textit{Termination:} stops early if a candidate meets both the score threshold (\texttt{value\_thresh} = 0.9) and correctness; otherwise, completes all $N$ evaluations.

    \item \textbf{Self-Refine}: Begins with a generated candidate and iteratively improves it using feedback and refinement prompts.  
    \texttt{num\_iteration} (default = 15) sets the max refinement steps.  
    \textit{Termination:} stops early if the feedback indicates "No error"; otherwise continues until the max iteration count is reached.

    \item \textbf{Long CoT}: A single long-form reasoning trace is generated, typically prefixed with a \texttt{<think>} tag. The model self-determines when to stop generating, often marked by the token \texttt{</think>}.  
    \textit{Termination:} when the model stops generating or produces an explicit termination tag. We set a max token number of each answer as 3500 tokens.

    \item \textbf{Depth-First Search (DFS)}: A search tree is constructed over reasoning steps, where nodes represent partial reasoning segments. Child nodes are generated up to \texttt{max\_depth} (default = 15), and evaluation is guided by value thresholds (\texttt{value\_thresh}) and pruning ratio (0.4).  
    \textit{Termination:} when either a  termination node (the content has "End of Answer" with perfect value (1.0) is found, or the search exceeds \texttt{max\_node} (default = 50) expansions, or all candidates are exhausted.
\end{itemize}

\textbf{Memory Methods.} We implement the following memory mechanisms.

\begin{itemize}
    \item \textbf{No Memory}: The baseline configuration with no carry-over between rounds. 

    \item \textbf{Supervised Fine-Tuning (SFT)}: The model is updated after each question using a single-step gradient descent. For reasoning models (e.g., \texttt{DeepSeek-R1}), we generate a compressed representation of the reasoning trace using a summarization prompt. The fine-tuning is performed using the following hyperparameters: \texttt{learning\_rate} = 5e-4, Single-step update with LoRA enabled.
    The SFT method is only applied if a correct and high-quality answer is found.

    \item \textbf{In-Context}: A memory buffer maintains the last $n$ successful examples. For each new prompt, we prepend these examples as demonstrations. We set the maximum number of in-context examples to 3.

    \item \textbf{Reflection}: After a correct response, a language model is prompted to reflect on the reasoning process. The resulting reflection is stored and used in future prompts under the "Consider:" section. Only one reflection is stored per example.

    \item \textbf{Multi-Case Reflection}: Instead of prompting reflection on a single instance, this method generates a joint reflection across multiple past successful cases. All stored examples are included in a joint input to generate an abstracted reflection. We limit the context by a maximum of 3 examples.

    \item \textbf{Reflect-Update}: This method iteratively refines a single running reflection. After each correct answer, the previous reflection and the new case are used to generate an updated reflection. The updated reflection replaces the old one, maintaining a compact and evolving summary of reasoning strategies. 
\end{itemize}

\paragraph{Memory Update Policy.}
Memory is updated only if a qualifying answer is found during the current question’s rounds (i.e., correct and exceeds score threshold). In SFT-based methods, this data is also used to update the parametric weights of the inference or reward model.

\paragraph{Evaluation Configuration.}
We run the evaluation over 10 question sets, with 4 repetitions for each question, to report the mean value.

\section{Additional Results}\label{append:res}
\subsection{Examples of Different Reflection Methods}

Here we show the reflection generated from the same question backbone.

\textbf{Reflect (individually): }(\textbf{This is an unrelated reflection derived from a less-similar question in the same set with Similarity S4.})

\texttt{
- Identify the principal amount (\$1,000), the interest rate (5\%), and the time period (3 years).\\
- Use the formula for simple interest: Interest = Principal × Rate × Time.\\
- Substitute the known values into the formula: Interest = \$1,000 × 0.05 × 3.\\
- Calculate the interest: Interest = \$150.\\
End of answer.
}

\textbf{Multi-Case Reflect}:

\texttt{
- Define the initial positions of the train and car as (0, 0).\\
- Let the speed of the train be \(v_t\) and the speed of the car be \(v_c\).\\
- After time \(t\), the position of the train will be \((0, v_t \cdot t)\) and the position of the car will be \((v_c \cdot t, 0)\).\\
- Use the distance formula \(d = \sqrt{(x_2 - x_1)^2 + (y_2 - y_1)^2}\) to express the distance \(D\) between the train and car as:
  \[
  D = \sqrt{(v_c \cdot t - 0)^2 + (0 - v_t \cdot t)^2} = \sqrt{(v_c \cdot t)^2 + (v_t \cdot t)^2}
  \]\\
- Simplify the equation to:
  \[
  D = t \cdot \sqrt{v_c^2 + v_t^2}
  \]\\
- Substitute the known values of \(t\), \(v_c\), and \(v_t\) to find the distance \(D\) after the specified time.\\
End of answer.
}

\textbf{Reflect-Update: }

\begin{lstlisting}
- Reaffirmed the use of the centripetal acceleration formula \( a_c = \frac{v^2}{r} \) as a fundamental principle in circular motion.\\
- Clarified the substitution of values, ensuring each step is transparent and logically follows from the previous one, which aids in comprehension.\\
- Highlighted the importance of checking calculations at each stage, particularly in deriving \( a_c \) and converting to terms of \( g \).\\
- Emphasized the need for consistent units throughout the calculation to avoid errors, reinforcing unit conversion practices.\\
- Improved the rounding approach to maintain precision in the final result, ensuring it aligns with significant figures based on input values.\\
- Incorporated a systematic verification step to cross-check the final answer against physical principles, ensuring it is reasonable and accurate.\\
- Continued to prioritize expressing results in terms of \( g \), fostering a deeper understanding of the practical implications of centripetal acceleration in real-world scenarios.\\
End of answer.

\end{lstlisting}

\end{document}